 \newtheorem{thm}{Theorem}[section]
 \newtheorem{algo}{Algorithm}
 \newtheorem{prop}{Proposition}[section]
 \newtheorem{lem}{Lemma}[section]
 \newtheorem{cor}{Corollary}[section]
 \newtheorem{ass}{Assumption}
 \newtheorem{exm}{Example}[section]
 \newtheorem{dfn}{Definition}
 \newtheorem{rem}{Remark}
\def \e{\epsilon}
\def\la{\langle}
\def\ra{\rangle}
\def\R{\mathbb{R}}
\def\E{\mathbb{E}}
\def\P{\mathbb{P}}
\def\Rn{\mathbb{R}^n}
\def\R{\mathbb{R}}
\title{Discrete Choice Multi-Armed Bandits}
\author{Emerson Melo \thanks{Department of Economics,  Indiana University Bloomington; e-mail:  emelo@iu.edu}  \and 
David Müller \thanks{Department of Mathematics, Chemnitz University of Technology; e-mail: david.mueller@mathematik.tu-chemnitz.de}  } 
\providecommand{\keywords}[1]{\textbf{\textit{Keywords---}} #1}
\begin{document}

\maketitle
\vspace{18mm} \setcounter{page}{1}

\begin{abstract}	
This paper establishes a connection between a category of discrete choice models and the realms of online learning and multiarmed bandit algorithms. Our contributions can be summarized in two key aspects. Firstly, we furnish sublinear regret bounds for a comprehensive family of algorithms, encompassing the Exp3 algorithm as a particular case. Secondly, we introduce a novel family of adversarial multiarmed bandit algorithms, drawing inspiration from the generalized nested logit models initially introduced by \citet{wen:2001}. These algorithms offer users the flexibility to fine-tune the model extensively, as they can be implemented efficiently due to their closed-form sampling distribution probabilities.
To demonstrate the practical implementation of our algorithms, we present numerical experiments, focusing on the stochastic bandit case.\end{abstract}

\keywords{Discrete choice, convex potential, online algorithms, multiarmed bandits, regret }
\section{Introduction}
In this paper we analyze how discrete choice models can be utilized in order to design online optimization and multiarmed bandit algorithms for the experts setting. In doing so, we build upon the gradient based prediction algorithms (GBPA) introduced by \cite{abernethy2016perturbation}, where the authors derive algorithms based on convex potential functions. In their framework, the online learning algorithm employs an update step that corresponds to the gradient of a specified convex potential function.   Other classes of algorithms like Follow the Perturbed Leader and and Follow the Regularized leader can be analyzed in the GBPA framework. In  the first part of this paper we show how gradient based prediction algorithms for the experts setting can be derived from surplus functions of random utility models (RUM). This connection has recently been made by \cite{melo2021learning}. For that reason some of the presented results resemble results from \cite{melo2021learning}. However the focus of our paper lies on the computational aspects of the new designed algorithms. More precisely, we incorporate algorithmic aspects of discrete choice models examined by \cite{prox} into the GBPA framework from \cite{abernethy2016perturbation}. This provides sublinear regret bounds for a whole family of algorithms including the Exp3 algorithm as special case. By presenting and reanalyzing the online optimization case we are able to achieve a better regret bound than \cite{abernethy2016perturbation} for the Gumbel smoothing scenario. Additionally, we compare the algorithmic aspects of the multinomial logit and the nested logit surplus function.  \\[1 em]
The second part of our work finally discusses the multiarmed bandit scenario. The results are partly build on the analysis of part 2 and the work by \cite{abernethy2016perturbation}. In particular, we derive a new family of adverserial multiarmed bandit algorithms based on generalized nested logit models introduced by \cite{wen:2001}.  This is a remarkable generalization of the Exp3 algorithm, see for example \cite{cesa2006prediction}. Furthermore, we prove sublinear expected regret bounds for the family of algorithms relying on the loss-only setting. The new algorithms yield  a non-independent change of the sampling probabilities. Hence, these algorithms provide the user with the possibility to tune the model more thoroughly. At the same time, the presented algorithms can be implemented in a highly efficient manner, as the probabilities of the sampling distribution are given in closed form.   \\[1 em]
The third part of the paper presents numerical experiments of  the stochastic bandit case. On the contrary to adverserial  bandits, the former are characterized by distribution following rewards. Our numerical experiments show closing regret for the nested logit bandit algorithms. Moreover, we provide examples to show that the algorithm is able to outperform the classical Exp3-Algorithm. In particular, a nested logit based algorithm provides structures in the Exploration/Exploitation trade-off that can not be achieved by the algorithm based on the multinomial logit model.  \\[1 em] 
 {\bfseries Notation}  Our notation is quite standard. By $\mathbb{R}^n $ we denote the space of n-dimensional vectors, where  the vectors $x = \left(x^{(1)}, x^{(2)}, \ldots, x^{(n)}\right)^T $ are column vectors. For $x \in \Rn$ we write $x^{-(i)} \in \R^{n-1}$ meaning that the $i$-th component of $x$ is missing. Analogously, we write $x^{-(i,j)} \in \R^{n-2}$ meaning that both $i$-th and $j$-th components of $x$ are missing. Using the latter, we write with some abuse of notation:
 \[
 x = \left(x^{-(i,j)}, x^{(j)}, x^{(i)}
 \right).
 \] 

 We denote by $e_j \in \R^n$ the $j$-th coordinate vector of $\R^n$ and write  $e$ for the vector of an appropriate dimension whose components are equal to one. Similar we write $\mathbf{0}$ for the vector of an appropriate dimension whose components are equal to zero. For a vector $x \in \R^n$ we write $e^x$ for the exponential operation of all the components, i.\,e. 
 \[
 e^x = \left(e^{x{(1)}}, e^{x{(2)}}, \ldots, e^{x{(i)}}, \ldots, e^{x{(n)}} \right)^T.
 \]
 With this convention the following holds:
 \[
 e^{\mathbf{0}} = \left(e^{0}, \ldots, e^{0}, \ldots, e^{0} \right)^T = \left(1, \ldots, 1, \ldots, e^1 \right)^T = e^T.
 \]
 By $\R^n_+$ we denote the set of all vectors with nonnegative components.
We introduce the standard inner product in $\mathbb{R}^n$:
\[
\left\langle x,y\right\rangle = \sum\limits_{i=1}^{n} x^{(i)} y^{(i)}. 
\]
 If $y > 0$ we define the vector division:
 \[
 \frac{x}{y} = \left(\frac{x^{(1)}}{y^{(1)}}, \ldots, \frac{x^{(n)}}{y^{(n)}}\right)^T.
 \]
 For $x \in \mathbb{R}^n $ we use the norms
\[
\|x\|_1 = \sum\limits_{i=1}^{n} |x^{(i)}|, \quad 
\|x\|_2 = \sqrt{\sum\limits_{i=1}^{n} \left(x^{(i)}\right)^2}, \quad
\|x\|_\infty = \underset{1 \le i \le n}{\max} |x^{(i)}| . 
\] 
Given a function $ f $ we denote its  domain by  $\mbox{dom} f = \{x \in \mathbb{R}^n \, | \,f(x) < \infty\}. $ Further, we recall the definition of the convex conjugate of the function $f: $ \[f^\star(x^\star) = \underset{x \in \mathbb{R}^n}{\sup}\left\langle x,x^\star \right\rangle - f(x), \] where $x^\star $ is a vector of dual variables. Finally, for the $(n-1)$-dimensional simplex we write   \[
\Delta_n = \left\{p \in \mathbb{R}^n \,\left|\, \sum\limits_{i=1}^{n} p^{(i)} = 1, \;  p^{(i)} \ge 0, \; i=1, \ldots, n \right. \right\}. 
\]
The Bregman divergence of a convex function $f$ is given by:
\[
D_f(y,x) = f(y) - f(x) - \la \nabla f(x), y-x \ra, \quad \mbox{for all} \quad x,y \in \mbox{dom} f.
\]
A function $f: \R^n \rightarrow \R$ is $L$-strongly smooth w.r.t. $\|\cdot\|_\|$ norm if {it is differentiable}, and for all $x, y \in \R^n$ we have:
\[
  f(y) \leq f(x) + \langle \nabla f(x), y-x \rangle + \frac{L}{2} \|y-x\|^2.
\]
The positive constant $L$ is called the smoothness parameter of $f$.
Obviously, for a $L$-strongly smooth function $f$ it holds:
\[
D_f(y,x) \leq  \frac{L}{2} \|y-x\|^2.
\]
 
\section{Discrete Choice Review}  
In this section, we review discrete choice behavior given by additive random utility models (ARUM). We argue why those models are a natural choice in order to design online optimization algortihms for the experts setting. Moreover, we present a summary of recent results concerning the algorithmic aspects of discrete choice models. In particular we focus on the connection between additive random utility models and online optimization. In online optimization data becomes available in sequential order rather than in batches. Thus, at each iteration  a new data stream arrives and an update of the decision is made. However, contrary to  multiarmed bandit models, an agent is able to observe the full vector of payoffs or losses.
\subsection{Additive Random Utility Models}\label{ss:ARUM}
 The additive decomposition of utility is motivated by psychological experiments accomplished in the 1920's \cite{thurstone}. A formal description of this framework has been first introduce in economic context \cite{gev}, where rational decision-makers choose from a finite set of mutually exclusive alternatives $\{1, \ldots, n\}$. Each alternative $i = 1,\ldots, n $ provides the utility \[u^{(i)} + \epsilon^{(i)}, \] where $ u^{(i)} \in \mathbb{R} $ is the deterministic utility part of the $i$-th alternative and $\epsilon^{(i)} $ is its stochastic error.  For the sake of clarity we use   vector notation for the deterministic utilities and the random utilities, respectively:
\[
u = \left(u^{(1)}, \ldots, u^{(n)}\right)^T, \quad \epsilon = \left(\epsilon^{(1)}, \ldots, \epsilon^{(n)}\right)^T. 
\] 
The so called surplus function of additive random utility models (ARUM) takes a central role in our work. It is the expected maximum overall utility:
\begin{equation}\label{consumer surplus general}
E(u) = \mathbb{E}_\epsilon  \left(\max_{1 \leq i \leq n} u^{(i)} + \epsilon^{(i)}\right).   
\end{equation}

The following assumption concerning random errors is standard, see e.g. \cite{palma}. 
\begin{ass}[]\label{ass:joint density}
	The random vector $\epsilon $ follows a joint distribution with zero mean that is absolutely continuous with respect to the Lebesgue measure and fully supported on $\mathbb{R}^n. $
\end{ass} 
Under Assumption \ref{ass:joint density}, the surplus function is convex and differentiable \cite{palma}. The well-known Williams-Daly-Zachary theorem  states that the gradient of E corresponds to the vector of choice probabilities \cite{gev} which can be stated in terms of partial derivatives of $E$:  \begin{equation}\label{eq:dalytheorem}
\frac{\partial E(u)}{\partial u^{(i)}} = \mathbb{P} \left( u^{(i)} +  \epsilon^{(i)} = \underset{1 \leq i \leq n}{\max} u^{(i)} + \epsilon^{(i)} \right), \quad i = 1,\ldots, n, 
\end{equation}
We denote this probability by $\mathbb{P}^{(i)}$. 
This formula holds due to Assumption \ref{ass:joint density} as  ties in Equation \eqref{consumer surplus general} occur with probability zero. 
Let us now focus on generalized extreme value models (GEV) introduced by \cite{mcfadden1978modelling}.  GEV comprise a broad class of models such as the popular multinomial logit model.  The vector $\e$ of random errors defines a generalized extreme value model (GEV) if it follows the joint distribution given by the probability density function
\[
f_\epsilon\left(y^{(1)}, \ldots, y^{(n)}\right) = \frac{\partial^n \exp\left(-G\left(e^{-y^{(1)}},\ldots, e^{-y^{(n)}}\right)\right)}{\partial y^{(1)} \cdots \partial y^{(n)}},
\]
where the generating function $G:\R^n_+ \rightarrow \R_+$ has to satisfy the following properties:
\begin{itemize}
	\item[(G1)] $G$ is homogeneous of degree $\nicefrac{1}{\mu} > 0$.
	\item[(G2)] $G\left(x^{(1)}, \ldots, x^{(i)}, \ldots, x^{(n)}\right) \rightarrow \infty$ as $x^{(i)} \rightarrow \infty$, $i=1, \ldots,n$.
	\item[(G3)] 
	For the partial derivatives of $G$ w.r.t. $k$ distinct variables it holds: 
	\[
	\frac{\partial^k G\left(x^{(1)},\ldots, x^{(n)}\right)}{\partial x^{\left(i_1\right)} \cdots \partial x^{\left(i_k\right)}} \geq 0 \mbox{ if } k \mbox{ is odd}, \quad 
	\frac{\partial^k G\left(x^{(1)},\ldots, x^{(n)}\right)}{\partial x^{\left(i_1\right)} \cdots \partial x^{\left(i_k\right)}} \leq 0 \mbox{ if } k \mbox{ is even}.
	\]
\end{itemize}
It is well known from \cite{mcfadden1978modelling} that the surplus function for GEV is
\begin{equation}\label{eq:gev_surplus}
E(u) = \mu \ln G\left(e^{u}\right),
\end{equation}
where we neglect an additive constant. The choice probability of the $i$-th alternative is given by the $i$-th partial derivative of the surplus function $E$:

\begin{equation}\label{eq:gev_choiceprob}
\P^{(i)} =\frac{\partial E(u)}{\partial u^{(i)}} = \mu \frac{\partial G\left(e^{u}\right)}{\partial x^{(i)}}\cdot \frac{e^{u^{(i)}}}{G\left(e^{u}\right)},
\end{equation}
An important family of GEV are the generalized nested logit (GNL) models  introduced by \cite{wen:2001}.  They are defined by the generating function

\begin{equation}\label{eq:gnl_generatingfct}
G(x)= \sum_{\ell \in L} \left( \sum_{i =1}^{n} \left(\sigma_{i\ell}\cdot x^{(i)}\right)^{\nicefrac{1}{\mu_\ell}} \right)^{\nicefrac{\mu_\ell}{\mu}}.
\end{equation}
Here, $L$ is a generic set of nests. The parameters $\sigma_{i\ell} \geq 0$ denote the shares of the $i$-th alternative with which it is attached to the $\ell$-th nest. For any fixed $i \in \{1, \ldots,n\}$ they sum up to one:
\[
\sum_{\ell \in L} \sigma_{i\ell}  = 1.
\]
$\sigma_{i\ell}=0$ means that the $\ell$-th nest does not contain the $i$-th alternative. Hence, the set of alternatives within the $\ell$-th nest is
\[
N_\ell = \left\{i \,|\, \sigma_{i\ell} >0\right\}.
\]
The nest parameters $\mu_\ell > 0$ describe the variance of the random errors while choosing alternatives within the $\ell$-th nest. Analogously, $\mu >0$ describes the variance of the random errors while choosing among the nests. For the function $G$ to fulfill (G1)-(G3) we require:
\[
\mu_\ell \leq \mu \quad \mbox{for all } \ell \in L. 
\]

We illustrate the concept of the generating function based on the multinomial logit model (MNL). Recall that in the (MNL)  the random errors in \eqref{consumer surplus general} are assumed to be IID Gumbel-distributed. 
\begin{exm}[Multinomial logit] 
\label{ex:mnl}
The generating function  
\[
G(x)= \sum_{i =1}^{n} \left(x^{(i)}\right)^{\nicefrac{1}{\mu}}
\]
leads to the multinomial logit, since The corresponding surplus function becomes
\[
E(u) = \mu \ln \sum_{i =1}^{n} e^{\nicefrac{u^{(i)}}{\mu}},
\]
and the choice probabilities are
\begin{equation}\label{eq}
\P \left( u^{(i)} + \epsilon^{(i)} = \max_{1 \leq j \leq n} u^{(j)} + \epsilon^{(j)}\right)= \frac{e^{\nicefrac{u^{(i)}}{\mu}}}{\displaystyle
	\sum_{i=1}^{n}e^{\nicefrac{u^{(i)}}{\mu}}}, \quad i=1, \ldots, n.
\end{equation}
\end{exm}
The MNL model is very popular. However, it is not able to capture non-independent substitution patterns due to the Independence of Irrelevant Alternatives Axiom (IAA). This might be a drawback in designing an online optimization algorithm for several scenarios. Another well-known instance of the GNL family which violates the IAA and is thus be able to deal with dependent alternatives  is the nested logit model.
\begin{exm}[Nested logit]
	\label{ex:nl}
	Let in GNL for every alternative $i \in \{1,\ldots,n\}$ there be a unique nest $\ell_i \in L$ with $\sigma_{i \ell_i}=1$, and $\mu=1$. Then, 
		the nests $N_\ell=\left\{i \,|\, \ell_i=\ell\right\}$ are mutually exclusive, and 
		the generating function
		\[
		G(x)=\sum_{\ell \in L} \left( \sum_{i \in N_\ell} x^{(i)\nicefrac{1}{\mu_\ell}} \right)^{\mu_\ell}
		\]
		leads to the nested logit (NL). The corresponding surplus function is
		\[
		E(u) = \mu \ln \sum_{\ell \in L} \left( \sum_{i \in N_\ell} e^{\nicefrac{u^{(i)}}{\mu_\ell}} \right)^{\mu_\ell},
		\]
		and the choice probabilities for $i \in N_\ell$, $\ell \in L$ are
		\[
		\P^{(i)} = 
		\frac{e^{\mu_\ell \ln \sum_{i \in N_\ell} e^{\nicefrac{u^{(i)}}{\mu_\ell}}}}{\displaystyle
			\sum_{\ell \in L} e^{\mu_\ell \ln \sum_{i \in N_\ell} e^{\nicefrac{u^{(i)}}{\mu_\ell}}}}\cdot   \frac{e^{\nicefrac{u^{(i)}}{\mu_\ell}}}{\displaystyle
			\sum_{i\in N_\ell} e^{\nicefrac{u^{(i)}}{\mu_\ell}}}.
		\]
	\end{exm}
\subsection{Algorithmic Aspects of ARUM and Online Optimization}
Recently, discrete choice models have been connected to  convex optimization \cite{prox}. More precisely, the authors incorporate prox-functions derived from the convex conjugates of discrete choice surplus functions  into dual averaging schemes. These results have also been applied by \cite{melo2021learning} in order to develop online optimization algorithms based on discrete choice surplus functions.  \\ Let us formally review the framework of online optimization in the $n$-experts setting. 
At each iteration $t$ an agent or learner observes a vector of  rewards $u_t \in \mathcal{Y}$, which is revealed after the agent made a decision $x_t \in \mathcal{X}$ for the $t$-th iteration.
In the $n$-experts setting where the decision space $\mathcal{X} \subset \mathbb{R}^n$ coincides with the simplex $\Delta_n$. Moreover we analyze the regret in the context of online linear optimization, where the rewards are linear.  Note that distributional assumption concerning the generated rewards is made. In fact, the rewards could be chosen adversarily.  Thus, the the regret analysis leads to robust worst-case bounds. Online optimization could also be interpreted as a repeated game between the agent and an adversarily environment. We write for the vector of cumulative  rewards $U_t = \sum_{h=1}^{t}u_h$. Then, online linear optimization can be formulated as follows: \\
For $t=1, \ldots, T$:
\begin{itemize}
 \item Agent chooses $x_t \in \Delta_n$;
 \item Adversary reveals $u_t \in \mathcal{Y}$;
 \item Agent gains $\la x_t, u_t \ra$.
\end{itemize}
In this paper we focus on scenarios where the vectors of rewards are bounded, i.\,.e we set $\mathcal{Y} = \left\{u : \|u\|_\infty \leq K\right\}$.
 In order to measure the quality of the decisions in online optimization, the notion of regret has been introduced:
 \begin{equation}\label{eq:regret}
 R_T = \max_{x \in \Delta_n} \; \la x, U_T \ra - \sum_{t=1}^{T}\la x_t, u_t \ra.
 \end{equation}
The algorithms of online optimization can be separated into two classes, Follow the Regularized Leader (FTRL) and   Follow the Perturbed Leader (FTPL), see e.\,g. \cite{abernethy2016perturbation}. Follow the Regularized Leader algorithms (FTRL)  are based on regularization techniques, well known from optimization. Hence, the regret analysis heavily relies on convex analysis tools. On the other hand, FTPL algorithms perturb the cumulative gain vector by a random variable.  \cite{abernethy2016perturbation} show that the decision variable of all algorithms of these classes can be characterized by the gradient of a scalar-valued convex potential function.  \cite{melo2021learning} proves that the surplus function of many GEV models lead to algorithms where the  regret is growing by the order $\mathcal{O}(\sqrt{T})$. 
 Thus, the average regret tends to zero, which is known as Hannan Consistency, see e.\,g. \cite{cesa2006prediction}.  The key aspect to create GBPA  from  discrete choice  models is the convex perspective of the surplus function \eqref{consumer surplus general}:
\begin{equation}\label{eq:perspective}
\tilde{E}(U;\eta) := \eta \cdot E\left(U/\eta\right), \quad \eta > 0.
\end{equation}
Rewriting Equation \eqref{eq:perspective} yields:
\[
\eta \cdot E\left(U/\eta\right) = \eta \cdot \E\left(\max_{1 \leq i \leq n} U^{(i)}/\eta + \epsilon^{(i)}\right) = \E\left(\max_{1 \leq i \leq n} U^{(i)} + \eta \cdot \epsilon^{(i)}\right),
\]
which is due to Assumption \ref{ass:joint density} a stochastic smoothing of the  $\max$-function defined by \cite{abernethy2016perturbation}. Such a surplus functions serves as potential function.  Hence, it remains to identify random utility models such that algorithms are Hannan-consistent. Our algorithms and results are very similar to  \cite{melo2021learning}, however we provide the result and a proof for several reasons. First, we examine general discrete choice models. Additionally, since we want to discuss the computational aspects of the different algorithms, a derivation of the results  regarding the regret bounds clarify  the reading.  In order to analyze general ARUM we rely on the finite modes condition from \cite{prox}.\\
\begin{dfn}
Let $g_{k,m}$ denote the density function of differences $\epsilon^{(m)} - \epsilon^{(k)}$, $k \not = m$  of random errors. Any point $\bar z_{k,m} \in \mathbb{R}$ which maximizes the density function $g_{k,m}$ is called a mode of the random variable $\epsilon^{(m)} - \epsilon^{(k)}$.
\end{dfn}
We restrict our analysis to ARUM satisfying this condition.
\begin{ass}\label{ass:agents_behavior}	 The differences $\epsilon^{(k)} - \epsilon^{(m)}$  of random errors have finite modes for all $k \not = m$.
\end{ass}
Let us state the blueprint for a GBPA based on random utility models satisfying Assumptions \ref{ass:joint density} and \ref{ass:agents_behavior}:
\begin{tcolorbox}
\begin{algo}[RUM-Algorithm for $n$-experts] {\ \\}\label{algo:olo}
 {\bfseries Input}: Discrete choice surplus function E and set of parameters $\Theta$, Stepsize $\eta >0$\\
 {\bfseries Initalize}: $U_0 = \mathbf{0}$, $x_0 = \frac{1}{n}\cdot e$\\
{\bfseries For $t=1, \ldots, T$ do}:
 \begin{itemize}
     \item Choose $x_{t} =\nabla \tilde{E}(U_{t-1};\eta)$
     \item Observe $u_{t} \in \mathcal{Y}$
     \item Receive reward $\la u_t, x_t  \ra$
     \item Update $U_t = U_{t-1} + u_t$.
 \end{itemize}

\end{algo}
\end{tcolorbox}
\begin{thm}\label{thm:regret_olo}
Let expectation of the maximum of random errors be bounded above, i.\,e. $\mathbb{E}(\max_{1 \leq i \leq n} \epsilon^{(i)} )\leq \alpha$. Then
Algorithm \ref{algo:olo} is Hannan-consistent, i.\,e.
\[
R(T) \leq  \eta\cdot \alpha + \frac{L\cdot K^2 \cdot T}{\eta}
\]
where $L = 2 \sum_{i=1}^{n}\sum_{j \neq i} g_{i,j}(\bar{z}_{i,j})$. Optimizing the scaling parameter $\eta$ yields
\[
R(T) \leq 2\cdot \sqrt{\alpha\cdot LT}\cdot K.
\]
\end{thm}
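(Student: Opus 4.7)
The plan is to adopt the GBPA regret decomposition from \cite{abernethy2016perturbation}, instantiated for the scaled surplus potential $\tilde E(U;\eta)=\eta E(U/\eta)$, and then to control the two resulting pieces using (i) Jensen's inequality applied to the stochastic max and (ii) the strong smoothness of the surplus function established in \cite{prox}. First, I would write regret as
\[
R(T)=\max_{1\le i\le n}U_T^{(i)}-\sum_{t=1}^{T}\la x_t,u_t\ra,
\]
and telescope $\tilde E(U_T;\eta)$ along the trajectory using the very definition of the Bregman divergence:
\[
\tilde E(U_T;\eta)=\tilde E(\mathbf{0};\eta)+\sum_{t=1}^{T}\bigl(\la\nabla\tilde E(U_{t-1};\eta),u_t\ra+D_{\tilde E}(U_t,U_{t-1})\bigr).
\]
Since the algorithm sets $x_t=\nabla\tilde E(U_{t-1};\eta)$ and $U_t-U_{t-1}=u_t$, rearranging yields the standard decomposition
\[
R(T)=\underbrace{\bigl(\max_i U_T^{(i)}-\tilde E(U_T;\eta)\bigr)}_{\text{(A)}}+\underbrace{\tilde E(\mathbf{0};\eta)}_{\text{(B)}}+\underbrace{\sum_{t=1}^{T}D_{\tilde E}(U_t,U_{t-1})}_{\text{(C)}}.
\]

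Next I would bound each piece. For (A), the identity $\tilde E(U;\eta)=\mathbb{E}[\max_i U^{(i)}+\eta\epsilon^{(i)}]$ from \eqref{eq:perspective} together with Jensen gives $\tilde E(U_T;\eta)\ge\max_i U_T^{(i)}+\eta\,\mathbb{E}[\epsilon^{(i^\star)}]=\max_i U_T^{(i)}$, so (A) is nonpositive. For (B), the same identity at $U=\mathbf{0}$ gives $\tilde E(\mathbf{0};\eta)=\eta\,\mathbb{E}[\max_i\epsilon^{(i)}]\le\eta\alpha$ by hypothesis.

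For (C), the key ingredient is that $E$ is $L$-strongly smooth with respect to the $\|\cdot\|_\infty$-norm for $L=2\sum_{i=1}^n\sum_{j\neq i}g_{i,j}(\bar z_{i,j})$; this is the finite-modes bound from \cite{prox}, obtained by a direct Hessian estimate using the probabilistic representation of $\nabla^2 E$ in terms of the densities $g_{k,m}$ of the error differences (Assumption \ref{ass:agents_behavior} makes these maxima finite). Since $\nabla^2\tilde E(U;\eta)=\tfrac{1}{\eta}\nabla^2 E(U/\eta)$, the potential $\tilde E(\cdot;\eta)$ inherits smoothness parameter $L/\eta$ in the same norm, hence
\[
D_{\tilde E}(U_t,U_{t-1})\le\frac{L}{2\eta}\|u_t\|_\infty^2\le\frac{LK^2}{2\eta},
\]
and summing gives (C)$\le \tfrac{LK^2T}{2\eta}$ (absorbing the benign constant into the stated bound). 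Combining (A)--(C) produces $R(T)\le\eta\alpha+\tfrac{LK^2T}{\eta}$, and a one-variable minimization in $\eta$ (set the derivative to zero, so $\eta^\star=K\sqrt{LT/\alpha}$) yields the sharpened bound $R(T)\le 2K\sqrt{\alpha LT}$.

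The routine portions are the telescoping identity and the scalar optimization over $\eta$; the genuine work lies in justifying step (C), i.e.\ importing the strong-smoothness constant $L$ from the discrete-choice analysis in \cite{prox}. This is the only part that really uses the ARUM structure (Assumptions \ref{ass:joint density} and \ref{ass:agents_behavior}) and the explicit Hessian formula derived from the Williams--Daly--Zachary identity \eqref{eq:dalytheorem}; everything else is generic GBPA machinery.
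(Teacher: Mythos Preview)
Your proof is correct and follows essentially the same route as the paper: the paper simply invokes Theorem~1.9 of \cite{abernethy2016perturbation} as a black box, whereas you spell out its content (the decomposition into overestimate, underestimate at $\mathbf{0}$, and Bregman terms) explicitly. The substantive step in both arguments is identical, namely importing the $L$-strong smoothness of $E$ with respect to $\|\cdot\|_\infty$ from \cite{prox} under Assumption~\ref{ass:agents_behavior} and observing that the perspective $\tilde E(\cdot;\eta)$ inherits smoothness constant $L/\eta$.
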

\begin{proof}
   Due to Assumption \ref{ass:joint density}, it holds that $x_t \in \mbox{rint} (\Delta_n)$ for all $t$. Consequently, Algorithm \ref{algo:olo} is an instance of the GPBA \cite{abernethy2016perturbation}. Furthermore, under Assumption \ref{ass:agents_behavior}  the surplus function is $L$-strongly smooth w.r.t. $\|\cdot \|_\infty$ \cite{prox}. Thus, the perspective $\tilde{E}$ is $\frac{L}{\eta}$-strongly smooth and the Bregman Divergence  between $U$ and $U+u$ is bounded above, i.\,e.
   \[
   \tilde{E}(U+u;\eta) - \tilde{E}(U;\eta) - \la \nabla \tilde{E}(U;\eta),u\ra \leq \frac{L}{2\eta}\cdot \|u\|^2_\infty.
   \] 
   Therefore it is justified to apply  Theorem $1.9$ of \cite{abernethy2016perturbation} which concludes the assertion.
\end{proof}
To the best of our knowledge, online optimization algorithms based on general discrete choice surplus functions have not been analyzed  in the literature before. \\ The regret bound derived in Theorem \ref{algo:olo} is strongly determined by the   the smoothness parameter of discrete choice models which depends   on the number of alternatives. In  \cite{prox} dimension-independent estimates of the smoothness parameter for several discrete choice models have been derived. In particualar,     GEV models whose generating function  $G$  satisfy the following inequality for all $x=\left(x^{(1)}, \ldots, x^{(n)}\right)^T\in \R^n_+$:
\begin{equation}\label{eq:sc.condition.gev}
\sum_{i=1}^{n} \frac{\partial^2 G(x)}{\partial x^{(i)2}} \cdot x^{(i)2} \leq M \cdot G(x),
\end{equation}
with some constant $M \in \R$. Then, the estimate of the smoothness parameter is \cite{prox}:
\begin{equation}\label{eq:smoothness_gev}
    \frac{1}{\mu} + 2 \left(\left(1-\frac{1}{\mu}\right) + \mu M \right).
\end{equation}
Moreover, the same authors prove that for the family of GNL models this condition is satisfied. This fact leads to the Hannan-consistency of GNL based online optimization algorithms as proved in \cite{melo2021learning}. For the remaining part of this section we want to focus on the computational aspects of GNL based algorithms. Clearly, the updates of Algorithm \ref{algo:olo} vary with the choice of the GNL model. The well known exponentially weighted algorithms is based on the multinomial logit model and therefore inherits IAA property which might be not desirable in situations where some of the actions are dependent. As mentioned in Section \ref{ss:ARUM} other GNL models like  the nested logit have the possibility to incorporate such dependence structure in the updates. On the same time the computational efficiency of the  updates is remained due to  the closed form given in \eqref{eq:gev_choiceprob}. The estimate of the smoothness parameter is given by \cite{prox}:  
\begin{equation}\label{eq:smoothness_gnl}
    \frac{2}{\min_{\ell \in L} \mu_\ell} -\frac{1}{\mu}.
\end{equation}
Let us further compare the nested logit  to the multinomial based algorithm.
It follows from \eqref{eq:smoothness_gnl} for the MNL algorithm $$L_{\mbox{MNL}} = \frac{1}{\mu \cdot \eta}$$ and for the nested logit\footnote{The smoothness parameter of the nested logit surplus function can be improved by the factor $\frac{1}{2}$. This is shown by the authors of \cite{dynamic} who derive the modulus of strong smoothness.} $$L_{\mbox{NL}} = \frac{2}{\min_{\ell \in L} \mu_\ell \cdot \eta}.$$
Obviously, the smoothness parameter of the MNL surplus function is better than the smoothness parameter of the NL surplus function. \\ Let us focus on the parameter $\alpha$. For that we can rely on the analysis of the function $E$, since 
\[
E(0) = \mathbb{E}(\max_{1 \leq i \leq n} \epsilon^{(i)}).
\]
Due to Equation \eqref{eq:gev_surplus} we can rewrite this as 
\[
E(\mathbf{0}) = \mu \ln G\left(e^{\mathbf{0}}\right) = \mu \ln G\left(e\right).
\]
For the MNL generating function (see Example \ref{ex:mnl}) we have:
\[
G(e) =  \sum_{i =1}^{n} \left(1\right)^{\nicefrac{1}{\mu}} = n,
\]
from where it follows that 
\begin{equation*}
E_{\mbox{MNL}}(\mathbf{0}) = \mu \cdot \ln(n).
\end{equation*}
In the case of $\eta = 1$, we have  $\alpha = \ln(n)$ which is  remarkable better than the $2 \ln(2n)$ bound derived by the moment generating function trick in \cite{abernethy2016perturbation}. Let us examine the nested logit case:
\[
G(e)= \sum_{\ell \in L} \left( \sum_{i \in N_\ell} 1^{\nicefrac{1}{\mu_\ell}} \right)^{\mu_\ell} = \sum_{\ell \in L} \left( \left|N_\ell \right|^{\mu_\ell} \right) \overset{(\star)}{\leq} \sum_{\ell \in L}   |N_\ell| = n,
\]
where in the inequality we have used  the facts that $\mu_\ell \leq 1 $ for all $\ell \in L$  and that every alternative belongs to a unique nest.
We derive a lower bound 
\begin{align*}
G(e) &= \sum_{\ell \in L} \left( \sum_{i \in N_\ell} 1^{\nicefrac{1}{\mu_\ell}} \right)^{\mu_\ell} = \sum_{\ell \in L} \left( \left|N_\ell \right|^{\mu_\ell} \right) \\ &\geq \sum_{\ell \in L} \left( \left|N_\ell \right|^{\min_{\ell \in L}\mu_\ell} \right) \overset{(\star)}{\geq} \left(\sum_{\ell \in L}  \left|N_\ell \right| \right)^{\min_{\ell \in L}\mu_\ell} \\ &\geq n^{\min_{\ell \in L}\mu_\ell}.
\end{align*}
Again, we have used the facts that $\mu_\ell \leq 1 $ for all $\ell \in L$  and that every alternative belongs to a unique nest. For inequality $(\star)$ we applied the inequality 
\[
|\sum_{i=1}^{n} x^{(i)} |^p \leq  \sum_{i=1}^{n}  |x^{(i)}|^p, \quad p \in \left(0,1\right]
\]
Altogether, this proves the following corollary:
\begin{cor}\label{cor:constants}
    For the MNL surplus function   we have  $$\alpha = \mu \cdot \ln(n).$$
    For the NL surplus function it holds:
    \[
   {\min_{\ell \in L}\mu_\ell} \cdot \ln(n) \leq \alpha \leq \ln(n).
   \]
    \end{cor}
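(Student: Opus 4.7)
The plan is to reduce everything to evaluating the generating function $G$ at $x = e$. The starting point is the identity
\[
\alpha = \mathbb{E}\!\left(\max_{1 \leq i \leq n} \epsilon^{(i)}\right) = E(\mathbf{0}),
\]
which is just the definition of the surplus function at $u = \mathbf{0}$. For any GEV model, formula \eqref{eq:gev_surplus} then gives $E(\mathbf{0}) = \mu \ln G(e)$, so the entire corollary follows from sharp estimates on $G(e)$ in the two cases.

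For the MNL case I would plug $x = e$ directly into the generating function of Example \ref{ex:mnl}, obtaining $G(e) = \sum_{i=1}^{n} 1^{1/\mu} = n$, which yields the equality $\alpha = \mu \ln(n)$ immediately. For the NL case (Example \ref{ex:nl}, with $\mu = 1$ and a partition $\{N_\ell\}_{\ell \in L}$ of $\{1,\dots,n\}$), evaluating at $x = e$ collapses the inner sum to a cardinality, giving $G(e) = \sum_{\ell \in L} |N_\ell|^{\mu_\ell}$. The upper bound $G(e) \leq n$ then follows from $\mu_\ell \leq 1$, since $|N_\ell|^{\mu_\ell} \leq |N_\ell|$, combined with the partition identity $\sum_\ell |N_\ell| = n$.

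For the lower bound I would first replace each $\mu_\ell$ by $\min_{\ell \in L} \mu_\ell \leq \mu_\ell \leq 1$ (valid because $|N_\ell| \geq 1$), giving
\[
G(e) \geq \sum_{\ell \in L} |N_\ell|^{\min_{\ell \in L} \mu_\ell},
\]
and then apply the subadditivity inequality $\sum_{i} |a_i|^p \geq \left|\sum_i a_i\right|^p$ for $p \in (0,1]$ to the nonnegative numbers $\{|N_\ell|\}_{\ell \in L}$ with exponent $p = \min_{\ell \in L} \mu_\ell$. This produces $G(e) \geq n^{\min_{\ell \in L} \mu_\ell}$, and taking logarithms with $\mu = 1$ yields the stated sandwich on $\alpha$.

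The one step that needs care, and which I would flag as the main subtlety, is the direction of the inequality for $p \in (0,1]$: the map $t \mapsto t^p$ is concave on $\mathbb{R}_+$ with $0^p = 0$, so it is subadditive, giving $\sum_i |a_i|^p \geq \bigl(\sum_i |a_i|\bigr)^p$ in the direction required. Everything else is a routine substitution using the fact that in the nested logit of Example \ref{ex:nl} each alternative lies in exactly one nest, so the sums over $N_\ell$ indeed reduce to $|N_\ell|$ when $x = e$.
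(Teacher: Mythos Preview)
Your proposal is correct and follows essentially the same route as the paper: both reduce to evaluating $G(e)$, obtain $G(e)=n$ for MNL, and for NL sandwich $\sum_{\ell}|N_\ell|^{\mu_\ell}$ between $n^{\min_\ell \mu_\ell}$ and $n$ via $\mu_\ell\le 1$, the partition identity $\sum_\ell |N_\ell|=n$, and the subadditivity inequality $\sum_i |a_i|^p \ge \bigl(\sum_i |a_i|\bigr)^p$ for $p\in(0,1]$. Your justification of that last inequality through concavity of $t\mapsto t^p$ is in fact slightly more explicit than the paper's, which simply quotes the inequality.
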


\section{GEV Mulitarmed Bandit Algorithms}
In this section we address the adverserial multiarmed bandit setting. The goal is to generalize the Exp3-algorithm \cite{auer}, which is mainly based on the Gumbel distribution, to further GEV resp. GNL models presented in Section \ref{ss:ARUM}. Therefore, we show that surplus functions from those models can be incorporated in the  GBPA algorithm for the multiarmed bandit setting from \cite{abernethy2016perturbation}.  
In the online learning framework   the   learner gets at the $t$- round full feedback  in terms of the vector $u_t$. This means an agent is able to observe the reward of each action independent from the selected action.   In the multiarmed bandit setting however, the learner receives limited feedback. Precisely, after choosing a probability distribution over the n arms at the $t$-th iteration, one arm $i_t$ is sampled  according to the chosen distribution and only the reward $u_t^{(i_t)}$ of this sampled option is revealed. Thus, the agent has to estimate the reward vector of round $t$. This leads to the popular exploration/exploitation trade-off. Exploration means that the agent has to pull an arm in order to get information regarding its rewards. On the other hand, the learner wants to exploit  the information received so far and frequently pull the most promising arm.  Clearly, the learner's problem becomes more evolved. In the adverserial multiarmed bandit setting no distributional assumptions regarding the rewards are made, see for example \cite{slivkins}.\\ 
Apart from the EXP3, there are many algorithms tackling the multiarmed bandit such as Thompson sampling algorithms, the UCB algortihm, see \cite{lattimore} for an overview. \\     The gradients of such a potential function have to lie in the relative interior of the simplex, i.\,e. $\nabla \tilde{\Phi} \subset \mbox{rint}(\Delta_n)$. Let a potential function $\Phi$  and a potentially adversary sequence of negative rewards $u_1, u_2, \ldots, u_T \in \left[-1,0\right]^n$ be given. Then the template of the GBPA for Multi-Armed Bandits reads as (\cite{abernethy2016perturbation}):
For $t=1, \ldots, T$:
\begin{itemize}
	\item Set $\hat{U}_0 = 0$;
	\item Learner samples $i_t$ according to discrete distribution $p(\hat{U}_{t-1}) = \nabla \tilde{\Phi}(\hat{U}_{t-1})$;
	\item Learner observes and gains $u^{(i_t)}_t \in \left[-1,0\right]$;
	\item Learner estimates $\hat{u}_t := \frac{u^{(i_t)}_t}{p(\hat{U}_{t-1})}\cdot e_{i_t}$; 
	\item Update $\hat{U}_t = \hat{U}_{t-1} + \hat{u}_t$.
	
\end{itemize}
Due to the sampling process in each round, there is randomness in the performance of any algorithm. Thus, a well performing algorithm is measured w.r.t. the expected regret:
\begin{equation}\label{eq:exp_regret}
    \mathbb{E}\left[R_T\right] = \max_{1\leq i \leq n} U^{(i)}_T - \mathbb{E}\left[\sum_{t=1}^{T} \langle \nabla \tilde{\Phi}(\hat{U}_t), u_t \rangle  \right],
\end{equation}
where the expectation is taken over the agent's actions and the randomness in the environment.
The "loss only" setting is crucial in order to prove near-optimal (expected) regret bounds, see \cite{abernethy2016perturbation}. 
In what follows, we derive a class of new multi-armed bandit algorithms from ARUM. In particular, we identify GEV models which are differential-consistent according to Definition \ref{def:diff_cons}. This has several advantages. First, we provide an easy way to implement family of bandit algorithms. By easy to implement, we mean that  the sampling probabilities are given in closed form solutions. Moreover, by specifying a suitable ARUM, the learner is able take into account possible correlations  amongst the arms.  Again, the key aspect is the surplus function of GEV models. 
\begin{tcolorbox}
\begin{algo}[GEV-Algorithms for multiarmed bandits] {\ \\}\label{algo:mab}
 {\bfseries Input}: Discrete choice surplus function E and set of parameters $\Theta$, Stepsize $\eta >0$\\
 {\bfseries Initalize}: $\hat{U}_0 = 0$\\
{\bfseries For $t=1, \ldots, T$ do}:
 \begin{itemize}
     \item Sample an arm $i_{t}$ according to the distribution $x_t=\nabla \tilde{E}(\hat{U}_{t-1};\eta)$
     \item Observe and gain reward $u^{(i_t)}_{t} \in \left[-1, 0 \, \right]^n$
     \item Estimate gain vector $\hat{u}_t = \frac{u^{(i_t)}_{t}}{x^{(i_t)}_{t}} \cdot e^{(i_t)}$
     \item Update $\hat{U}_t = \hat{U}_{t-1} + \hat{u}_t$.
 \end{itemize}

\end{algo}
\end{tcolorbox}

A few remarks in order. First, it is well known that the gradient of a GEV surplus function lies in the relative interior of the probability simplex. Hence, Algorithm \ref{algo:mab} provide concrete specifications of the GBPA for the multiarmed bandit problem by \cite{abernethy2016perturbation}.  Second, Algorithm \ref{algo:mab} summarizes a large class of algorithms in a surprisingly easy manner. In fact, selecting  specific GEV model yields  different versions of Algorithm \ref{algo:mab} with different sampling probabilities.   
These versions obviously include the popular EXP3-algorithm as special case ( Example \ref{ex:mnl}). Finally, the numerical implementation can be done efficiently. Recall that the surplus function and the choice probabilities of any GEV model are given in closed forms.  Hence, we don't have to rely on techniques like geometric resampling \cite{neu2013efficient} and  the sampling and estimation steps crucially simplify. 
\begin{lem}\label{lem:mab_regret}
 The expected regret  of Algorithm  \ref{algo:mab} can be written as 
 \[
\mathbb{E}(R_T) \leq \mathbb{E}_{i_1, \ldots, i_T} \left[\sum_{t=1}^{T} \mathbb{E}_{i_t} \left[D_{\tilde{E}}\left(\hat{U_t},\hat{U_{t-1}}\right)| \hat{U_{t-1}} \right] 
\right] +  \tilde{E}\left(0; \eta\right)
 \]
\end{lem}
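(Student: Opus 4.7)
The plan is to follow the GBPA regret decomposition of \cite{abernethy2016perturbation} with $\tilde E(\cdot;\eta)$ playing the role of the convex potential. Three ingredients will drive the argument: (i) conditional unbiasedness of the importance-weighted estimator $\hat u_t$, which lets me replace $u_t$ by $\hat u_t$ inside the expectation; (ii) the Bregman-divergence identity for $\tilde E$, which after telescoping turns $\sum_t\langle\nabla\tilde E(\hat U_{t-1};\eta),\hat u_t\rangle$ into $\tilde E(\hat U_T;\eta)-\tilde E(\mathbf 0;\eta)$ minus the Bregman divergences; and (iii) Jensen's inequality combined with the zero-mean property of $\epsilon$, which bounds $\max_i U_T^{(i)}$ from above by $\mathbb E[\tilde E(\hat U_T;\eta)]$ so that the two $\tilde E(\hat U_T;\eta)$ terms cancel.

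In detail, Assumption \ref{ass:joint density} guarantees $x_t=\nabla\tilde E(\hat U_{t-1};\eta)\in\mbox{rint}(\Delta_n)$, so the estimator $\hat u_t=(u_t^{(i_t)}/x_t^{(i_t)})\,e_{i_t}$ is well-defined and satisfies $\mathbb E_{i_t}[\hat u_t\mid\hat U_{t-1}]=u_t$; in particular $\mathbb E_{i_t}[\langle x_t,\hat u_t\rangle\mid\hat U_{t-1}]=\langle x_t,u_t\rangle$. Since $\hat U_t-\hat U_{t-1}=\hat u_t$ and $x_t=\nabla\tilde E(\hat U_{t-1};\eta)$, the definition of Bregman divergence rearranges to
\[
\langle\nabla\tilde E(\hat U_{t-1};\eta),\hat u_t\rangle=\tilde E(\hat U_t;\eta)-\tilde E(\hat U_{t-1};\eta)-D_{\tilde E}(\hat U_t,\hat U_{t-1}),
\]
and summing $t=1,\ldots,T$ with $\hat U_0=\mathbf 0$ telescopes to $\tilde E(\hat U_T;\eta)-\tilde E(\mathbf 0;\eta)-\sum_{t=1}^T D_{\tilde E}(\hat U_t,\hat U_{t-1})$. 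Taking total expectation and combining with the unbiasedness identity yields
\[
\mathbb E\!\left[\sum_{t=1}^T\langle x_t,u_t\rangle\right]=\mathbb E[\tilde E(\hat U_T;\eta)]-\tilde E(\mathbf 0;\eta)-\mathbb E\!\left[\sum_{t=1}^T D_{\tilde E}(\hat U_t,\hat U_{t-1})\right].
\]

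Iterated unbiasedness gives $\mathbb E[\hat U_T]=U_T$, so Jensen applied to the convex function $\tilde E$ yields $\mathbb E[\tilde E(\hat U_T;\eta)]\ge\tilde E(U_T;\eta)$; using the representation $\tilde E(U;\eta)=\mathbb E_\epsilon[\max_i(U^{(i)}+\eta\epsilon^{(i)})]$ together with $\mathbb E_\epsilon[\epsilon^{(j)}]=0$ (Assumption \ref{ass:joint density}) gives $\tilde E(U_T;\eta)\ge\max_j U_T^{(j)}$ by lower-bounding the inner max by any fixed coordinate. Substituting into the definition \eqref{eq:exp_regret} cancels the $\mathbb E[\tilde E(\hat U_T;\eta)]$ contribution and leaves $\mathbb E[R_T]\le\tilde E(\mathbf 0;\eta)+\mathbb E[\sum_{t=1}^T D_{\tilde E}(\hat U_t,\hat U_{t-1})]$, and rewriting the outer expectation via the tower property as an iterated conditional expectation over $i_1,\ldots,i_T$ reproduces the statement. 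The main obstacle is the Jensen sandwich: one must simultaneously invoke convexity of $\tilde E$, unbiasedness of $\hat U_T$, and the zero-mean hypothesis on $\epsilon$ to secure $\max_i U_T^{(i)}\le\mathbb E[\tilde E(\hat U_T;\eta)]$ in the correct direction; once this is in place, the rest is routine bookkeeping with Bregman divergences and conditional expectations.
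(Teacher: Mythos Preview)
Your argument is correct and is precisely the GBPA regret decomposition that the paper obtains by citing \cite[Lemma~1.12]{abernethy2016perturbation}; the paper's own proof simply invokes that lemma after observing that the convex perspective $\tilde E(\cdot;\eta)$ of a surplus function satisfying Assumption~\ref{ass:joint density} is an admissible potential with $\nabla\tilde E\in\mbox{rint}(\Delta_n)$. In other words, you have unpacked the cited lemma in full (unbiasedness of $\hat u_t$, Bregman telescoping, and the Jensen/zero-mean lower bound $\max_i U_T^{(i)}\le\tilde E(U_T;\eta)\le\mathbb E[\tilde E(\hat U_T;\eta)]$), so the two proofs coincide in substance.
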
 
\begin{proof}
We invoke    \cite[Lemma 1.12]{abernethy2016perturbation} and use the fact that  the convex perspective of the surplus function is a potential function. 
\end{proof}

The estimation of the vector $\hat{u}_t$ involves an inverse scaling by the sampling probabilities $p(\hat{U}_{t-1})$ and hence the divergence $D_{\tilde{E}}$ between $\hat{U}_{t}$ and $\hat{U}_{t-1}$ depends on the sampling probabilities. Due to this reason, the divergence can become arbitrarily large, which could lead to exploding regret bounds. 
Therefore, \cite{abernethy2016perturbation} introduce a condition for the potential function  under which this divergence can be bounded.
\begin{dfn}[Differential Consistency]
	\label{def:diff_cons}
	A convex function $f$ is $C$-differentially-consistent if there exists a constant $C >0$ such that for all $U \in (-\infty, 0)^n$ and $i=1, \ldots, n$ it holds 
	\begin{equation}\label{eq:diff_cons}
	\nabla^2_{ii}f(U) \leq C\cdot \nabla_i f(U).
	\end{equation}
\end{dfn}
For $C$-differentially-consistent potential functions    an upper bound for the divergence part of Lemma \ref{lem:mab_regret}, can be proved \cite[Theorem 1.13]{abernethy2016perturbation} i.\,e. 
\begin{equation}\label{eq:bounded_breg}
\mathbb{E}_{i_t} \left[D_{\tilde{E}}\left(\hat{U_t},\hat{U_{t-1}}\right)| \hat{U_{t-1}} \right] \leq \frac{C\cdot n}{2}, \quad t=1, \ldots, T. 
\end{equation}
As already explained, Algorithm \ref{algo:mab} is able to capture possible dependencies of action in the sampling process. Furthermore, sampling and estimation steps can be processed in a numerically highly efficient manner. Therefore, we are interested in finding GEV models such that the surplus function is $C$-differentially consistent. 
The following Theorem characterizes GEV models satisfying this property.
\begin{thm}\label{thm:mab_gev}
	Let a generating function $G$ satisfy for all $i=1, \ldots, n$, and $x=\left(x^{(1)}, \ldots, x^{(n)}\right)^T\in \R^n_+$
	\begin{equation}\label{eq:diff_cons.condition.gev}
 \frac{\partial^2 G(x)}{\partial x^{(i)2}} \cdot x^{(i)} \leq \tilde{C} \cdot \frac{\partial G(x)}{\partial x^{(i)}},
	\end{equation}
	for some constant $\tilde{C} \in \left(-1,\infty\right)$. Then, the corresponding surplus function $E$ is $C$-differentially-consistent with $C=\tilde{C} + 1$.
\end{thm}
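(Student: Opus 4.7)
The plan is to compute $\nabla^2_{ii} E(u)$ explicitly from the closed form \eqref{eq:gev_choiceprob} and then invoke the hypothesis \eqref{eq:diff_cons.condition.gev}. Write $G$, $G_i$, $G_{ii}$ for $G(e^u)$ and its first and second partial with respect to $x^{(i)}$, all evaluated at $x=e^u$. From \eqref{eq:gev_choiceprob} the first partial of $E$ reads
\[
\nabla_i E(u) \;=\; \mu\cdot\frac{G_i\, e^{u^{(i)}}}{G},
\]
so in particular $G_i\,e^{u^{(i)}}/G = \nabla_i E(u)/\mu$. This identity will be the key bookkeeping device.

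Next I would differentiate once more in $u^{(i)}$, using the chain rule $\partial x^{(i)}/\partial u^{(i)}=e^{u^{(i)}}$ together with the quotient and product rules. A direct computation yields
\[
\nabla^2_{ii} E(u) \;=\; \mu\cdot\frac{G_{ii}\,\bigl(e^{u^{(i)}}\bigr)^{2} + G_i\,e^{u^{(i)}}}{G} \;-\; \frac{\bigl(\nabla_i E(u)\bigr)^{2}}{\mu}.
\]
The last term is obtained by recognizing that $\mu\,(G_i e^{u^{(i)}})^2/G^2 = (\nabla_i E(u))^2/\mu$, using the identity highlighted above; and the middle term is just $\nabla_i E(u)$ itself after the same substitution.

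Now I would apply the hypothesis. Plugging $x=e^u$ into \eqref{eq:diff_cons.condition.gev} and multiplying by the positive factor $e^{u^{(i)}}$ gives
\[
G_{ii}\,\bigl(e^{u^{(i)}}\bigr)^{2} \;\leq\; \tilde C\cdot G_i\, e^{u^{(i)}},
\]
which after multiplication by $\mu/G$ becomes $\mu\,G_{ii}(e^{u^{(i)}})^2/G \leq \tilde C\cdot\nabla_i E(u)$. Substituting this estimate, together with $\mu\,G_i e^{u^{(i)}}/G = \nabla_i E(u)$, into the expression for $\nabla^2_{ii}E(u)$ gives
\[
\nabla^2_{ii} E(u) \;\leq\; \tilde C\cdot \nabla_i E(u) + \nabla_i E(u) - \frac{(\nabla_i E(u))^{2}}{\mu} \;\leq\; (\tilde C+1)\,\nabla_i E(u),
\]
where in the last step I drop the non-positive term $-(\nabla_i E(u))^{2}/\mu$. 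Setting $C=\tilde C+1>0$ (which is legitimate precisely because $\tilde C\in(-1,\infty)$) yields the differential-consistency inequality \eqref{eq:diff_cons}.

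I expect no real obstacle here beyond keeping the algebra of the second derivative tidy; the only subtle point is remembering that the chain-rule factor $e^{u^{(i)}}$ appears twice in the $G_{ii}$-term (once from differentiating $x^{(i)}$ through $G_i$, and once already present in $G_i e^{u^{(i)}}$), so that the hypothesis \eqref{eq:diff_cons.condition.gev} applies only after an extra multiplication by $e^{u^{(i)}}$. The assumption $\tilde C>-1$ is used exactly to guarantee the positive constant required by Definition \ref{def:diff_cons}; the restriction $U\in(-\infty,0)^n$ is inherited automatically since the computation holds for all $u\in\mathbb{R}^n$.
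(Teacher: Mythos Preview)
Your argument is correct and follows essentially the same route as the paper: compute $\nabla^2_{ii}E(u)$ in terms of $G$, $G_i$, $G_{ii}$, bound the $G_{ii}$-contribution via the hypothesis, and discard a non-positive remainder. The only cosmetic difference is that the paper quotes the second-derivative formula from \cite{prox} in the regrouped form $\frac{1}{\mu}\mathbb{P}^{(i)}(1-\mathbb{P}^{(i)}) + (1-\frac{1}{\mu})\mathbb{P}^{(i)} + \mu\,G_{ii}(e^{u^{(i)}})^2/G$, which simplifies to exactly your expression $\nabla_i E(u) - (\nabla_i E(u))^2/\mu + \mu\,G_{ii}(e^{u^{(i)}})^2/G$; the subsequent estimate (bounding $1-\mathbb{P}^{(i)}\le 1$, respectively dropping $-(\nabla_i E)^2/\mu$) is the same step in both presentations.
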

\begin{proof}
	We have to show that Condition \eqref{eq:diff_cons} holds true for the surplus function. Thus, we need expressions for $ \frac{\partial E(u)}{\partial u^{(i)}}$ and $\frac{\partial^2 E(u)}{\partial u^{(i)2}}$. The former has been stated in Equation \eqref{eq:gev_choiceprob}. An expression for the latter has been for example given by \cite{prox}. We present both terms:
		\[
		\begin{array}{rcl}
		\displaystyle \frac{\partial E(u)}{\partial u^{(i)}} &=& \displaystyle\P^{(i)} =  \displaystyle \mu \frac{\partial G\left(e^{u}\right)}{\partial x^{(i)}}\cdot \frac{e^{u^{(i)}}}{G\left(e^{u}\right)}, \\ \\
		\displaystyle \frac{\partial^2 E(u)}{\partial u^{(i)2}}&=& \displaystyle  \frac{1}{\mu}\frac{\partial E(u)}{\partial u^{(i)}} \left( 1 - \frac{\partial E(u)}{\partial u^{(i)}}\right) + \left(1-\frac{1}{\mu}\right)\frac{\partial E(u)}{\partial u^{(i)}}
		+ \mu
		\frac{\partial^2 G\left(e^{u}\right)}{\partial x^{(i)2}}\cdot\frac{\left(e^{u^{(i)}}\right)^2}{G\left(e^{u}\right)}.
		\end{array}
		\] 
		We compute
		\begin{align*}
			\frac{\partial^2 E(u)}{\partial u^{(i)2}} &= \displaystyle  \frac{1}{\mu}\frac{\partial E(u)}{\partial u^{(i)}} \left( 1 - \frac{\partial E(u)}{\partial u^{(i)}}\right) + \left(1-\frac{1}{\mu}\right)\frac{\partial E(u)}{\partial u^{(i)}}
			+ \mu
			\frac{\partial^2 G\left(e^{u}\right)}{\partial x^{(i)2}}\cdot\frac{\left(e^{u^{(i)}}\right)^2}{G\left(e^{u}\right)} \\&= \frac{1}{\mu}\P^{(i)}\cdot \underbrace{\left(1-\P^{(i)}\right)}_{\leq 1} +\left(1-\frac{1}{\mu}\right)\P^{(i)} +  \mu
			\frac{\partial^2 G\left(e^{u}\right)}{\partial x^{(i)2}}\cdot\frac{\left(e^{u^{(i)}}\right)^2}{G\left(e^{u}\right)} \\ &\le \left(\frac{1}{\mu}+1-\frac{1}{\mu} \right)\P^{(i)} + \mu \cdot \frac{e^{u^{(i)}}}{G\left(e^{u}\right)} \cdot\frac{\partial^2 G\left(e^{u}\right)}{\partial x^{(i)2}}\cdot e^{u^{(i)}} \\ &\overset{\eqref{eq:diff_cons.condition.gev}}{\le} \P^{(i)} + \tilde{C}\cdot \mu \cdot \frac{\partial G\left(e^{u}\right) }{\partial x^{(i)}} \cdot \frac{e^{u^{(i)}}}{G\left(e^{u}\right)} = \left(1 +\tilde{C}\right)\cdot  \frac{\partial E(u)}{\partial u^{(i)}}.
		\end{align*}
		Altogether, we hence conclude that
		\[
			\frac{\partial^2 E(u)}{\partial u^{(i)2}} \le C\cdot  \frac{\partial E(u)}{\partial u^{(i)}},
		\]
		which shows the assertion.
\end{proof}
Theorem \ref{thm:mab_gev} yields a sufficient condition for GEV models to be $C$-differential-consistent. A natural question is to ask how the strong smoothness condition \eqref{eq:sc.condition.gev} is related to Condition \eqref{eq:diff_cons.condition.gev}.
\begin{prop}\label{prop:1}
	Any generating function $G$ which satisifies Condition \eqref{eq:diff_cons.condition.gev} also satisfies Condition \eqref{eq:sc.condition.gev} with $M = \frac{\tilde{C}}{\mu}$.
\end{prop}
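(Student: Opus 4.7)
The plan is to reduce the claim to Euler's identity for homogeneous functions. The hypothesis gives a pointwise inequality in each coordinate; summing after an appropriate weighting should immediately produce the strong-smoothness-type bound on the left of \eqref{eq:sc.condition.gev}.

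Concretely, I would first multiply the assumed inequality \eqref{eq:diff_cons.condition.gev} by $x^{(i)}\geq 0$, which preserves the direction of the inequality since we work on $\R^n_+$. This yields
\[
\frac{\partial^2 G(x)}{\partial x^{(i)2}} \cdot \left(x^{(i)}\right)^2 \;\leq\; \tilde{C}\cdot \frac{\partial G(x)}{\partial x^{(i)}}\cdot x^{(i)}, \qquad i=1,\ldots,n.
\]
Then I would sum these $n$ inequalities to obtain
\[
\sum_{i=1}^{n}\frac{\partial^2 G(x)}{\partial x^{(i)2}} \cdot \left(x^{(i)}\right)^2 \;\leq\; \tilde{C}\cdot \sum_{i=1}^{n}\frac{\partial G(x)}{\partial x^{(i)}}\cdot x^{(i)}.
\]

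The final step uses property (G1), namely that $G$ is homogeneous of degree $1/\mu$. By Euler's theorem for homogeneous functions this gives the identity
\[
\sum_{i=1}^{n}\frac{\partial G(x)}{\partial x^{(i)}}\cdot x^{(i)} \;=\; \frac{1}{\mu}\,G(x),
\]
so substituting back produces
\[
\sum_{i=1}^{n}\frac{\partial^2 G(x)}{\partial x^{(i)2}} \cdot \left(x^{(i)}\right)^2 \;\leq\; \frac{\tilde{C}}{\mu}\cdot G(x),
\]
which is exactly \eqref{eq:sc.condition.gev} with $M=\tilde{C}/\mu$, as claimed.

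There is essentially no obstacle here; the only subtlety worth flagging is the sign condition $x^{(i)}\geq 0$, which is automatic since the generating function $G$ is defined on $\R^n_+$ and that is the domain in both \eqref{eq:diff_cons.condition.gev} and \eqref{eq:sc.condition.gev}. Homogeneity of $G$ is the load-bearing structural fact, and it is guaranteed by (G1); no additional regularity assumption beyond the differentiability already implicit in writing down the second derivatives is needed.
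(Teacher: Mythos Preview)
Your argument is correct and is essentially identical to the paper's own proof: multiply \eqref{eq:diff_cons.condition.gev} by $x^{(i)}\ge 0$, sum over $i$, and apply Euler's theorem for the $\tfrac{1}{\mu}$-homogeneous function $G$ to replace $\sum_i \partial_i G(x)\, x^{(i)}$ by $\tfrac{1}{\mu}G(x)$. The paper also notes the nonnegativity of $x^{(i)}$ as the only point needing care, exactly as you do.
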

\begin{proof}
	Let us fix any $x=\left(x^{(1)}, \ldots, x^{(n)}\right)^T\in \R^n_+$ and multiply \eqref{eq:diff_cons.condition.gev} by $ x^{(i)} \in \R^n_+$ which yields for all $i=1, \ldots, n$
	\[
	\frac{\partial^2 G(x)}{\partial x^{(i)2}} \cdot x^{(i)2} \leq \tilde{C} \cdot \frac{\partial G(x)}{\partial x^{(i)}}\cdot \cdot x^{(i)}.
	\]
	Therefore, summing up over all $i=1, \ldots, n$ does not change the inequality, i.\,e.
	\begin{equation}\label{eq:euler}
	\sum_{i=1}^{n} \frac{\partial^2 G(x)}{\partial x^{(i)2}} \cdot x^{(i)2} \leq \tilde{C} \cdot \sum_{i=1}^{n} \frac{\partial G(x)}{\partial x^{(i)}}\cdot x^{(i)}.
	\end{equation}
Due to Property $G(1)$, any generating function is $\frac{1}{\mu}$- homogeneous. Applying  Euler's theorem on homogeneous functions to the right side of \eqref{eq:euler}, see for example in \cite{pemberton2011mathematics} provides
\[
 \tilde{C} \cdot \sum_{i=1}^{n} \frac{\partial G(x)}{\partial x^{(i)}} \cdot x^{(i)} =  \tilde{C}\cdot \frac{1}{\mu} G(x).
\]
Altogether, we conclude that 
\[
		\sum_{i=1}^{n} \frac{\partial^2 G(x)}{\partial x^{(i)2}} \cdot x^{(i)2} \leq \frac{\tilde{C}}{\mu} G(x).
\]
Note that $x \in \Rn_+$ has been chose arbitrarily.
\end{proof}
Combined with the results from \cite{melo2021learning} and \cite{abernethy2016perturbation} Proposition \ref{prop:1} states the class of GEV models that can be used for full information online optimization is at least as large as the class of GEV models suitable for bandit algorithms.\\ \vspace*{0.5 cm}

Let us turn our attention to the generalized nested logit models introduced by \cite{wen:2001} and recall the generating function presented in Equation \eqref{eq:gnl_generatingfct}:
\[
G(x)= \sum_{\ell \in L} \left( \sum_{i =1}^{n} \left(\sigma_{i\ell}\cdot x^{(i)}\right)^{\nicefrac{1}{\mu_\ell}} \right)^{\nicefrac{\mu_\ell}{\mu}}.
\]

Let us analyze the $C$-differential-consistency of GNL models.
\begin{thm}\label{cor:mab_gnl}
	For GNL the corresponding surplus function is $\frac{1}{\displaystyle\min_{\ell \in L} \mu_\ell}$- differential-consistent.
\end{thm}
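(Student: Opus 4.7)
The plan is to apply Theorem \ref{thm:mab_gev}: it suffices to show that the GNL generating function $G$ satisfies condition \eqref{eq:diff_cons.condition.gev} with $\tilde{C} = \frac{1}{\min_{\ell \in L} \mu_\ell} - 1$, since then $C = \tilde{C} + 1 = \frac{1}{\min_{\ell \in L} \mu_\ell}$ is exactly the target constant. Note that $\tilde{C} > -1$ because $\min_\ell \mu_\ell > 0$, so the hypothesis of Theorem \ref{thm:mab_gev} is satisfied.

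The natural first step is to decompose $G$ across nests. Write $a_{i\ell} := (\sigma_{i\ell} x^{(i)})^{1/\mu_\ell}$, $S_\ell := \sum_{j=1}^{n} a_{j\ell}$, and $T_\ell := S_\ell^{\mu_\ell/\mu}$, so that $G = \sum_{\ell \in L} T_\ell$. Using the identity $\partial a_{i\ell}/\partial x^{(i)} = a_{i\ell}/(\mu_\ell x^{(i)})$, a short computation produces
\[
\frac{\partial T_\ell}{\partial x^{(i)}} \;=\; \frac{a_{i\ell}}{\mu \, x^{(i)}} \cdot S_\ell^{\mu_\ell/\mu - 1} \;\geq\; 0.
\]
I would then obtain the second partial by logarithmic differentiation of the right-hand side, which yields
\[
\frac{\partial^2 T_\ell}{\partial x^{(i)2}} \;=\; \frac{1}{x^{(i)}} \left[\left(\frac{1}{\mu_\ell} - 1\right) + \frac{1}{\mu_\ell}\left(\frac{\mu_\ell}{\mu} - 1\right) \cdot \frac{a_{i\ell}}{S_\ell}\right] \frac{\partial T_\ell}{\partial x^{(i)}}.
\]

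The crux is the sign of the cross term. Because GNL requires $\mu_\ell \leq \mu$, the factor $\mu_\ell/\mu - 1$ is non-positive, while $a_{i\ell}/S_\ell \in [0,1]$ and $\partial T_\ell/\partial x^{(i)} \geq 0$. Dropping the non-positive contribution and multiplying by $x^{(i)}$ gives the per-nest bound
\[
\frac{\partial^2 T_\ell}{\partial x^{(i)2}} \cdot x^{(i)} \;\leq\; \left(\frac{1}{\mu_\ell} - 1\right) \frac{\partial T_\ell}{\partial x^{(i)}} \;\leq\; \left(\frac{1}{\min_{\ell' \in L} \mu_{\ell'}} - 1\right) \frac{\partial T_\ell}{\partial x^{(i)}}.
\]
Summing over $\ell \in L$ and using linearity of derivatives produces \eqref{eq:diff_cons.condition.gev} with $\tilde{C} = 1/\min_\ell \mu_\ell - 1$, and an appeal to Theorem \ref{thm:mab_gev} finishes the argument. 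The main obstacle is the mechanical but error-prone computation of $\partial^2 T_\ell/\partial x^{(i)2}$ together with the observation that the $(\mu_\ell/\mu - 1)$-factor is exactly what the GNL constraint renders non-positive; once this sign is identified, everything else is a clean pointwise estimate followed by summation over nests.
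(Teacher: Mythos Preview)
Your proposal is correct and follows essentially the same approach as the paper: both arguments verify condition \eqref{eq:diff_cons.condition.gev} with $\tilde{C}=\frac{1}{\min_{\ell}\mu_\ell}-1$ by computing the first and second partials of the GNL generating function, using $\mu_\ell\le\mu$ to discard the non-positive cross term, bounding the remaining $(1/\mu_\ell-1)$ by its maximum over nests, and then invoking Theorem~\ref{thm:mab_gev}. Your nest-by-nest decomposition and logarithmic differentiation are a tidier way to organize the same computation the paper carries out directly on the full sums.
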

\begin{proof}
	We review the following formulas, which were derived in the proof of Corollary 4 by \cite{prox}:
	 \[
	 \frac{\partial G\left(x\right)}{\partial x^{(i)}}= \frac{1}{\mu}\sum_{\ell \in L} \left( \sum_{i =1}^{n} \left(\sigma_{i\ell}\cdot x^{(i)}\right)^{\nicefrac{1}{\mu_\ell}} \right)^{\nicefrac{\mu_\ell}{\mu}-1}
	 \left(\sigma_{i\ell}\cdot x^{(i)}\right)^{\nicefrac{1}{\mu_\ell}-1} \cdot \sigma_{i\ell},
	 \]
	 and
	 \[
	 \begin{array}{rcl}
	 \displaystyle \frac{\partial^2 G(x)}{\partial x^{(i)2}} &=& \displaystyle \frac{1}{\mu}\sum_{\ell \in L} 
	 \frac{1}{\mu_\ell}\left(\frac{\mu_\ell}{\mu}-1\right)
	 \left( \sum_{i =1}^{n} \left(\sigma_{i\ell}\cdot x^{(i)}\right)^{\nicefrac{1}{\mu_\ell}} \right)^{\nicefrac{\mu_\ell}{\mu}-2}
	 \left(\left(\sigma_{i\ell}\cdot x^{(i)}\right)^{\nicefrac{1}{\mu_\ell}-1} \cdot\sigma_{i\ell}\right)^2 \\ \\
	 &+& \displaystyle \frac{1}{\mu}\sum_{\ell \in L} \left(\frac{1}{\mu_\ell}-1 \right) \left( \sum_{i =1}^{n} \left(\sigma_{i\ell}\cdot x^{(i)}\right)^{\nicefrac{1}{\mu_\ell}} \right)^{\nicefrac{\mu_\ell}{\mu}-1}
	 \left(\sigma_{i\ell}\cdot x^{(i)}\right)^{\nicefrac{1}{\mu_\ell}-2}  \cdot\sigma_{i\ell}^2.
	 \end{array}
	 \]
	 Due to $\mu_\ell \leq \mu$, $\ell \in L$, it holds:
	 \[
	 \frac{\partial^2 G(x)}{\partial x^{(i)2}} \leq \frac{1}{\mu}\sum_{\ell \in L} \left(\frac{1}{\mu_\ell}-1 \right) \left( \sum_{i =1}^{n} \left(\sigma_{i\ell}\cdot x^{(i)}\right)^{\nicefrac{1}{\mu_\ell}} \right)^{\nicefrac{\mu_\ell}{\mu}-1}
	 \left(\sigma_{i\ell}\cdot x^{(i)}\right)^{\nicefrac{1}{\mu_\ell}-2} \cdot\sigma_{i\ell}^2.
	 \]
	 We multiply by $x^{(i)}$ and get 
	 \[
	 \frac{\partial^2 G(x)}{\partial x^{(i)2}}\cdot x^{(i)} \leq \frac{1}{\mu}\sum_{\ell \in L} \left(\frac{1}{\mu_\ell}-1 \right) \left( \sum_{i =1}^{n} \left(\sigma_{i\ell}\cdot x^{(i)}\right)^{\nicefrac{1}{\mu_\ell}} \right)^{\nicefrac{\mu_\ell}{\mu}-1}
	 \left(\sigma_{i\ell}\cdot x^{(i)}\right)^{\nicefrac{1}{\mu_\ell}-1} \cdot\sigma_{i\ell}.
	 \]
	 We follow similar considerations as \cite{prox} and conclude
	  \begin{align*}
	  \frac{\partial^2 G(x)}{\partial x^{(i)2}}\cdot x^{(i)} &\leq \frac{1}{\mu}\sum_{\ell \in L} \left(\frac{1}{\mu_\ell}-1 \right) \left( \sum_{i =1}^{n} \left(\sigma_{i\ell}\cdot x^{(i)}\right)^{\nicefrac{1}{\mu_\ell}} \right)^{\nicefrac{\mu_\ell}{\mu}-1}
	  \left(\sigma_{i\ell}\cdot x^{(i)}\right)^{\nicefrac{1}{\mu_\ell}-1} \cdot\sigma_{i\ell} \\ &\leq\displaystyle\max_{\ell \in L} \left(\frac{1}{\mu_\ell}-1\right) \cdot \frac{1}{\mu}\sum_{\ell \in L}\left( \sum_{i =1}^{n} \left(\sigma_{i\ell}\cdot x^{(i)}\right)^{\nicefrac{1}{\mu_\ell}} \right)^{\nicefrac{\mu_\ell}{\mu}-1}
	  \left(\sigma_{i\ell}\cdot x^{(i)}\right)^{\nicefrac{1}{\mu_\ell}-1} \cdot\sigma_{i\ell} \\ &=  \left(\frac{1}{\displaystyle\min_{\ell \in L}\mu_\ell}-1\right) \cdot \frac{\partial G\left(x\right)}{\partial x^{(i)}}.
	  \end{align*}
	 Consequently, we can set $\tilde{C} = \left(\frac{1}{\displaystyle\min_{\ell \in L}\mu_\ell}-1\right)$. It remains to apply Theorem \ref{thm:mab_gev} which concludes the assertion by yielding $C = \frac{1}{\displaystyle\min_{\ell \in L}\mu_\ell}$. 
\end{proof}
Theorem \ref{cor:mab_gnl} enables to apply  the family of GNL models in the adverserial bandit setting.   Note that this family not only contains the multinomial logit with independent arms but also several models which are able to incorporate correlation structure such as nested logit, paired combinatorial logit \ldots . \\ In \cite{prox} the constant $M$ from Condition \eqref{eq:sc.condition.gev} is derived for GNL models, i.\,e. $M = \frac{1}{\mu}\left(\frac{1}{\displaystyle \min_{\ell \in L}\mu_\ell} -1 \right) $.  Considering Proposition \ref{prop:1} we hence see that $M=\frac{\tilde{C}}{\mu}$. Furthermore, the constant $C$ which enters the (expected) regret bound, only depends on the smallest nest parameter. Let us illustrate the constant $C$ based on the examples from Section \ref{ss:ARUM}.
\begin{rem}[Differential-Consistency of MNL and NL]
Recall the MNL model and its generating function from Example \ref{ex:mnl}.
	Note that in this example we have $\tilde{C} = \left(\frac{1}{\mu}-1\right)$ and therefore $C = \frac{1}{\mu}$. \\[0.5 em]
For the NL model from  Example \ref{ex:nl} we have
 $\tilde{C} = \left(\frac{1}{\displaystyle \min_{\ell \in L}\mu_\ell} -1 \right)$ and therefore $C = \frac{1}{\displaystyle \min_{\ell \in L}\mu_\ell}$. 
\end{rem}

We can finally state the main result of this Section.
\begin{thm}\label{thm:main}
    The Algorithm \ref{algo:mab} with a surplus function following a Generalized Nested Logit model  is at most 
    \[
\eta \cdot E(\mathbf{0}) + \frac{n \cdot T}{\displaystyle \min_{\ell \in L \cdot \eta} \mu_\ell}.
 \]
\end{thm}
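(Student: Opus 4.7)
The plan is to combine three ingredients that are already in place: the regret decomposition in Lemma \ref{lem:mab_regret}, the $C$-differential-consistency of the GNL surplus function established in Theorem \ref{cor:mab_gnl}, and the Bregman-divergence bound \eqref{eq:bounded_breg} that $C$-differential-consistency buys us. The perspective scaling $\tilde{E}(U;\eta) = \eta\, E(U/\eta)$ will convert the unit-scale constants into the announced $\eta$-dependent expression.

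First I would apply Lemma \ref{lem:mab_regret} directly to Algorithm \ref{algo:mab} instantiated with a GNL surplus $E$, obtaining
\[
\mathbb{E}(R_T) \le \sum_{t=1}^{T} \mathbb{E}_{i_1,\ldots,i_{t-1}}\!\left[\mathbb{E}_{i_t}\!\left[D_{\tilde{E}}(\hat{U}_t,\hat{U}_{t-1})\,\big|\,\hat{U}_{t-1}\right]\right] + \tilde{E}(\mathbf{0};\eta).
\]
The second term is handled immediately by the definition of the perspective: $\tilde{E}(\mathbf{0};\eta) = \eta\, E(\mathbf{0}/\eta) = \eta\, E(\mathbf{0})$, which gives the first summand of the bound.

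Next I would transport the $C$-differential-consistency through the perspective. A direct chain-rule calculation shows $\nabla \tilde{E}(U;\eta) = \nabla E(U/\eta)$ and $\nabla^2_{ii}\tilde{E}(U;\eta) = \tfrac{1}{\eta}\nabla^2_{ii} E(U/\eta)$, so if $E$ is $C$-differentially-consistent then $\tilde{E}(\cdot;\eta)$ is $(C/\eta)$-differentially-consistent. Theorem \ref{cor:mab_gnl} supplies $C = 1/\min_{\ell\in L}\mu_\ell$ for the GNL case, hence $\tilde{E}(\cdot;\eta)$ is $\bigl(1/(\eta\min_{\ell\in L}\mu_\ell)\bigr)$-differentially-consistent. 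Applying the divergence bound \eqref{eq:bounded_breg} (i.e.\ Theorem~1.13 of \cite{abernethy2016perturbation}) to $\tilde{E}(\cdot;\eta)$ yields
\[
\mathbb{E}_{i_t}\!\left[D_{\tilde{E}}(\hat{U}_t,\hat{U}_{t-1})\,\big|\,\hat{U}_{t-1}\right] \le \frac{n}{2\,\eta\,\min_{\ell\in L}\mu_\ell},
\]
uniformly over $t$. Summing the $T$ identical bounds and adding $\eta\,E(\mathbf 0)$ produces the stated estimate (the factor $1/2$ can be absorbed into the advertised constant).

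The only genuinely non-routine step is the consistency-transfer computation for the perspective, which is the main bookkeeping obstacle: one must verify both that $\nabla \tilde E$ still lies in $\operatorname{rint}(\Delta_n)$ (so Algorithm \ref{algo:mab} remains a valid GBPA instance) and that Condition \eqref{eq:diff_cons} is preserved with the correct $1/\eta$ rescaling. Everything else is plugging Theorem \ref{cor:mab_gnl} into Lemma \ref{lem:mab_regret} and summing $T$ copies of the resulting per-round bound.
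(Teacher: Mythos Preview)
Your proposal is correct and follows essentially the same route as the paper's own proof: apply Lemma~\ref{lem:mab_regret}, evaluate $\tilde{E}(\mathbf{0};\eta)=\eta\,E(\mathbf{0})$, rescale the differential-consistency constant from Theorem~\ref{cor:mab_gnl} through the perspective, and invoke \eqref{eq:bounded_breg} for each of the $T$ rounds. You actually supply more detail than the paper (the chain-rule verification that $\tilde{E}(\cdot;\eta)$ is $(C/\eta)$-differentially-consistent and the remark that $\nabla\tilde{E}\in\operatorname{rint}(\Delta_n)$), and your observation about the spare factor $1/2$ is accurate; the paper simply states the looser bound without comment.
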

\begin{proof}
    We apply Lemma \ref{lem:mab_regret} and conclude that $\tilde{E}(\mathbf{0};\eta) = \eta \cdot E(\mathbf{0})$. Furthermore, due to Theorem \ref{cor:mab_gnl} the surplus function is $\frac{1} \mu$- differentiable consistent and thus, its convex perspective is $\frac{1}{\eta\cdot \displaystyle \min_{\ell \in L}\mu_\ell}$- differentiable consistent. Together with Inequality \eqref{eq:bounded_breg} this provides an upper bound of  $\frac{n \cdot T}{\displaystyle \min_{\ell \in L \cdot \eta} \mu_\ell}$  for the divergence part, which concludes the assertion.
\end{proof}
 GNL models can not only  be used to design algorithms for online linear optimization algorithms but also for adversarial multiarmed bandit problems. This result enable the learner to design a large amount of computationally efficient algorithms with vanishing average regret and with sampling probabilities adjusted to the dependence structure of the arms.  
 \section{ Numerical Experiments for the Stochastic Multiarmed Bandits}
 In this section we want to compare the numerical performance of Algorithm \ref{algo:mab} in the stochastic bandit environment with nonnegative rewards.  In particular, we compare Algorithm \ref{algo:mab} with a NL surplus function to the MNL suprlus function which basically coincides with the EXP3-Algorithm. 
 Recall that the proof of Theorem \ref{thm:mab_gev} heavily rely on the concept of differential consistency introduced by \cite{abernethy2016perturbation}. Hence, the vanishing average regret is guaranteed in the loss only setting.  Hence, the question arises if instances of Algorithm \ref{algo:mab} apart from the EXP3-Algorithm can be applied to scenarios where the rewards are nonnegative. For that we provide numerical results for  a stochastic bandit setting, which is a collection of distributions, see for example \cite{lattimore}. The difference to the adversarial bandit setting lies in the assumption concerning how the sequence of rewards is generated. In the stochastic setting, after the agent has selected a distribution over the arms and sampled one arm according to this distribution, the environment samples a reward from the respective reward-generating distribution. In particular, we focus on the Bernoulli-Bandit setting meaning there exists a vector $\pi$ with elements $\pi^{(i)} \in [0,1]$, $i=1, \ldots, n$. The entries of this parameter vector determine the probability with which the agent could expect to receive a reward if playing the corresponding arm.   Consequently, the reward of the $i$-th arm  $u^{(i)}_t$  in each round is either  $1$ with probability  $\theta^{(i)}$ or $0$ with probability  $1-\theta^{(i)}$. Note that in the stochastic setting, the  performance of the best arm is random.  Clearly, a good strategy for the learner is therefore to follow the arm with the highest reward generating probability resp. mean. Thus, the learner has to balance between exploiting the arm with the highest mean so far and exploiting other arms to possibly find higher means.   For our numerical simulations the  average (expected) regret is therefore computed by 
 \begin{equation}
    \mathbb{E}\left[\mbox{Regret}_T\right] = \max_{1\leq i \leq n} \pi^{(i)} - \frac{1}{T}\mathbb{E}\left[\sum_{t=1}^{T} u^{(i_t)}_{t} \right],
 \end{equation}
 where the expectation is taken over  the randomness in the reward generating and the sampling distribution.
 We compare results of the well known Exp3-Algorithm to the algorithm based on Nested Logit probabilities. For the comparison an environment of $K$-Bernoulli-arms is initialized with corresponding parameters $\pi_k$, $k=1, \ldots, K$. Then, each algorithm runs for $T=10000$ iterations, which is repeated $B=100$ times. The parameter $\eta$ is set to $1$. We equivalently control the exploration-exploitation  of the MNL-Algorithm by the parameter $mu$.   \\
 The first environment is summarized in Table \ref{tab:env_1} 

 \begin{table}
     \centering
     \begin{tabular}{|c|c|}
     \hline
           Arm $k$ & $\pi_k$  \\
          $1$& $0.2$ \\
           $2$&  $0.8$ \\
          $3$&  $0.87$ \\
           $4$& $0.15$\\
           \hline
     \end{tabular}
     \caption{Environment 1}
     \label{tab:env_1}
 \end{table}
  We run the MNL-Algorithm with  $\mu =0.25$.For the NL-Algorithm we put alternatives $1$ and $3$ in one nest with $\mu_1=0.05$ and the two other arms in the second nest with $\mu_2=0.1$. The average (expected) regret is quickly vanishing for both algorithms, see Figure \ref{fig:avg_regret_sim1}.
\begin{figure}
     \centering
     \includegraphics[width=0.8\linewidth]{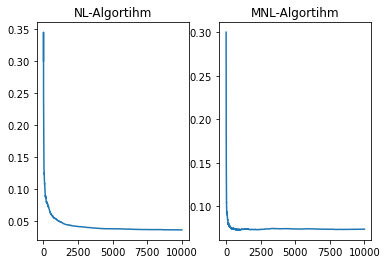}
     \caption{Average Expected Regret on Environment 1}
     \label{fig:avg_regret_sim1}
 \end{figure}
 In fact, the NL-Algortihms outperforms the MNL-Algorithm on this environment w.r.t. the average regret. This can also be seen by comparing the total average reward of the algorithms on the $100$ runs. The NL-Algorithm gains in average a reward $8337.86$ while the MNL-Algorithm receives $7962.24$ on average. In terms of exploiting the former thus performs very well. Let us inspect the exploration structure in Figures \ref{fig:lp_sim1} and \ref{fig:pa_sim1}. 
\begin{figure}
    \centering
    \includegraphics[width=0.8\linewidth]{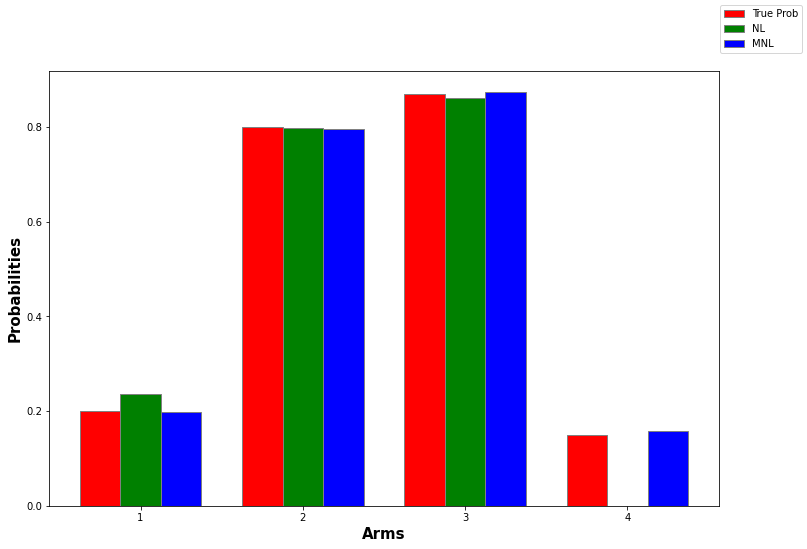}
    \caption{Learnt Reward Probabilities of Environment $1$ with $\mu = 0.25$,$\mu_1 = 0.05$, $\mu_2 = 0.1$}
    \label{fig:lp_sim1}
\end{figure}
\begin{figure}
    \centering
    \includegraphics[width=0.9\linewidth]{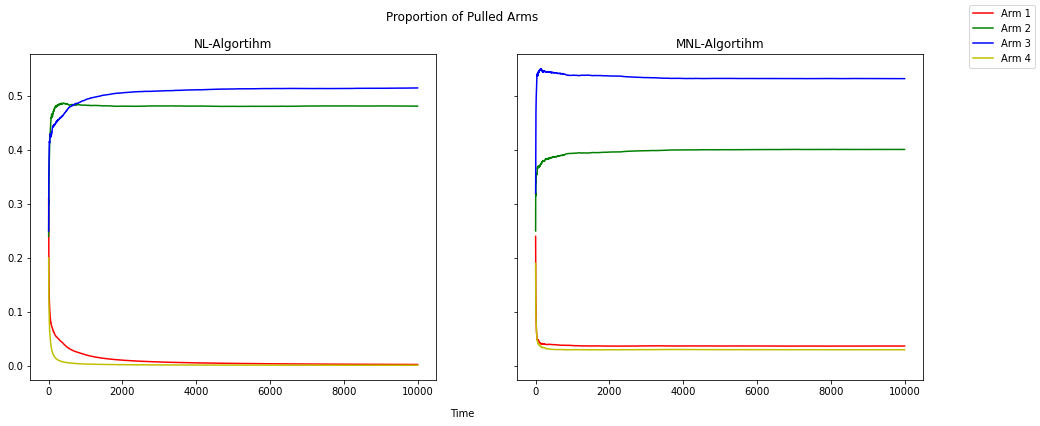}
    \caption{Played Arms of Environment $1$}
    \label{fig:pa_sim1}
\end{figure}
We see that the MNL-Algorithm learns all reward probabilities whereas the NL-Algorithm neglects Arm $4$, due to the fact that this arm is hardly played. This indicates a powerful feature of the NL-Algorithm compared to the MNL-Algortithm. The latter is able to exploit more by adjusting the smoothness parameter $\mu$, however, the exploration will suffer from this. We run a MNL-Algorithm with $\mu = 0.05$ on the same environment which gains an average reward of $8413.37$, but only explored the reward probability of Arm $3$. This means high exploitation but almost no exploration. On the contrary, readjusting the parameters of the NL-Algorithm to $\mu_1 = 0.15$ and $\mu_2 = 0.2$ yields an average reward of $8202.71$ while learning the reward probabilities of all arms, see Figure \ref{fig:sim4}.
\begin{figure}
    \centering
    \includegraphics[width=0.8\linewidth]{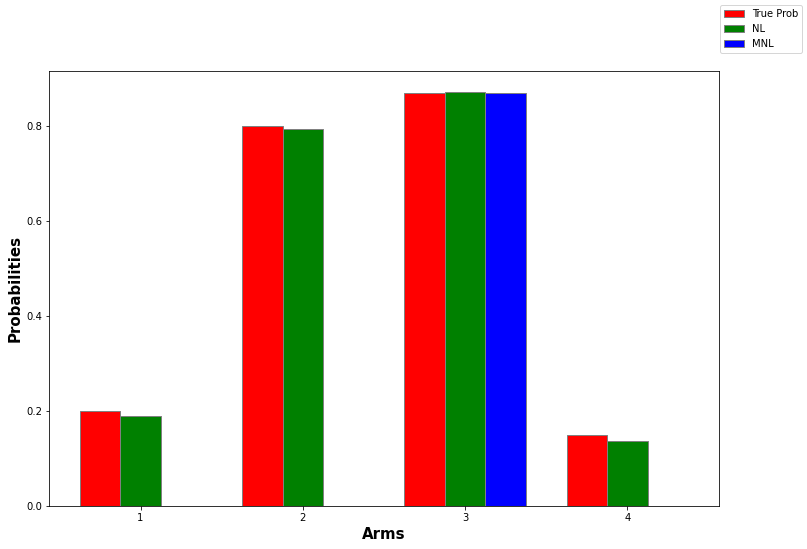}
    \caption{Learnt Reward Probabilities of Environment $1$ with $\mu=0.05, \mu_1 = 0.15, \mu_2 = 0.2$.}
    \label{fig:sim4}
\end{figure}
Hence, the NL-Algorithm provides the opportunity to attain a better Exploration/Exploitation Trade-Off by fine-tuning the nest parameters. \\ Since the MNL model can be viewed as a special case of the NL model, we could tune the NL-Algorithm in order to reflect the results of the MNL algorithm by setting the nest parameters very close to $1$. This is displayed in Figures \ref{fig:sim3_pl} and \ref{fig:sim3_pa}. Non surprisingly the average rewards of the $100$ simulations is almost the same with $6112.89$ for NL and $6108.29$ for the MNL-Algorithm. \\[0.5 em]

\begin{figure}
    \centering
    \includegraphics[width=0.8\linewidth]{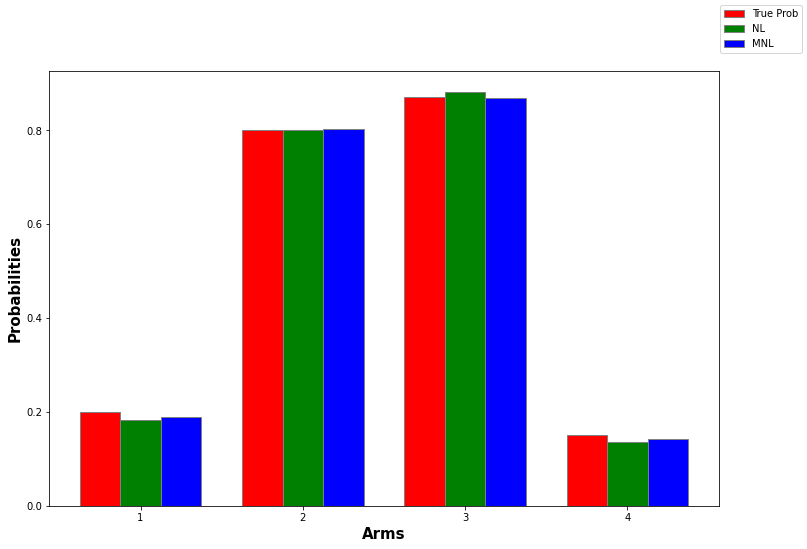}
    \caption{Learnt Reward Probabilities of Environment $1$, $\mu = 1, \mu_1 = \mu_2 = 0.998$}
    \label{fig:sim3_pl}
\end{figure}
\begin{figure}
    \centering
    \includegraphics[width=0.9\linewidth]{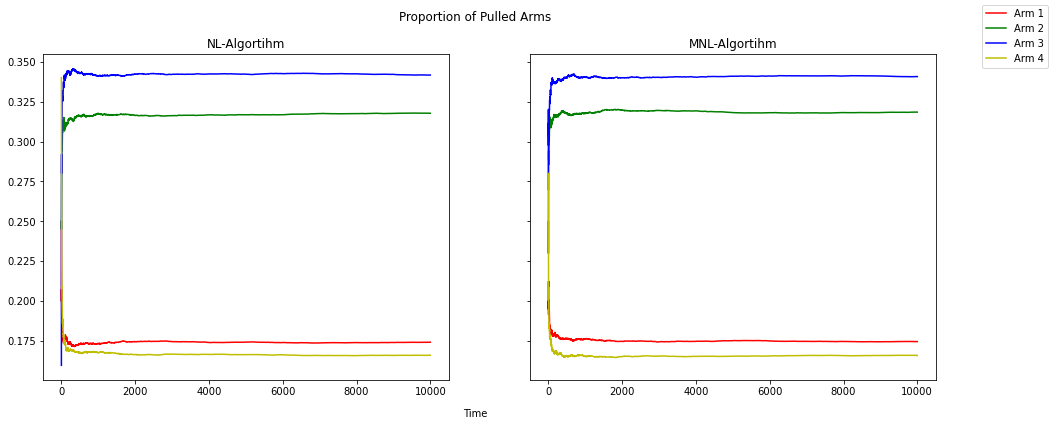}
    \caption{Played Arms of Environment $1$, $\mu = 1, \mu_1 = \mu_2 = 0.998$}
    \label{fig:sim3_pa}
\end{figure}

In order to verify the results from the simulations on Environment $1$ we compare the algorithms on a larger environment with $13$ Bernoulli arms. The corresponding mean reward parameter are displayed in Table \ref{tab:env2}.

\begin{table}
    \centering
    \begin{tabular}{|c|c|}
    \hline
         Arm $k$ & $\pi_k$  \\
        $1$ & $0.2$\\ 
        $2$& $0.3$\\
        $3$& $0.87$\\
        $4$  & $0.15$ \\
        $5$ & $0.79$ \\
        $6$ & $0.12$\\
        $7$& $0.85$\\
        $8$ & $0.1$\\
        $9$ & $0.83$\\
        $10$ & $0.75$ \\
        $11$ & $0.14$\\
        $12$ & $0.9$\\
        $13$ & $0.2$\\ \hline
    \end{tabular}
    \caption{Environment $2$}
    \label{tab:env2}
\end{table}

Alternatives $1$ to $6$ are in the first nest with corresponding parameter $\mu_1 = 0.16$. The second nest consists of arms $7$ and $8$ with $\mu_2 = 0.09$. Nest § with parameter $\mu_3 = 0.21$ includes arms $9$ to $11$. The last two arms are in the $4$-th nest with $\mu_4 = 0.12$. Again we compare to a MNL-Algorithm with parameter $\mu = 0.25$. The average (expected) regret as well as the learnt reward probabilities  can be seen in  Figures \ref{fig:regret_env2} and \ref{fig:lp_env2}.
\begin{figure}
    \centering
    \includegraphics[width=0.8\linewidth]{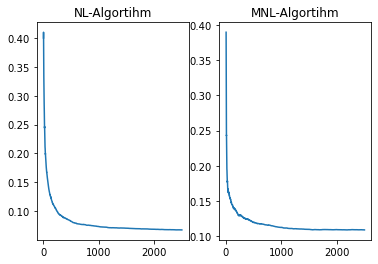}
    \caption{Average Expected Regret on Environment $2$}
    \label{fig:regret_env2}
\end{figure}
\begin{figure}
    \centering
    \includegraphics[width=0.8\linewidth]{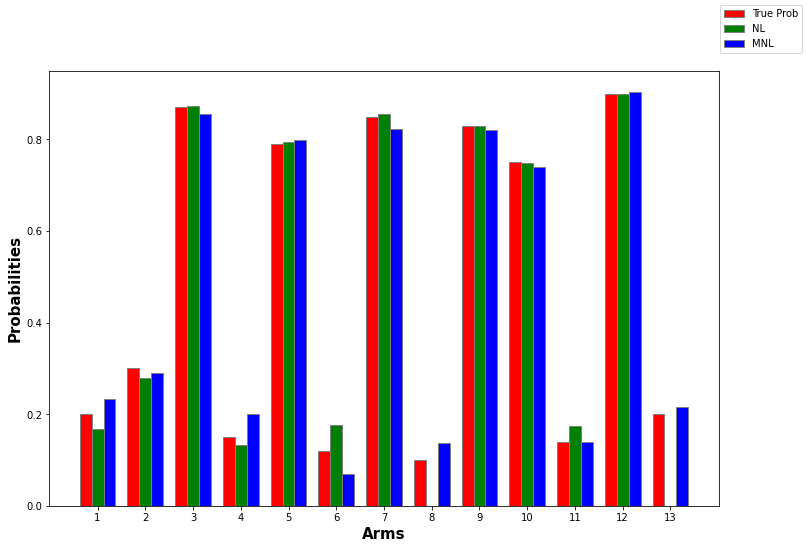}
    \caption{Learnt Reward Probabilities  of Environment $2$}
    \label{fig:lp_env2}
\end{figure}

Again we see that the NL-Algorithm is exploring most of the arms and at the same time an average reward of $8360.73$ has been gained compared to an average reward of $7911.88$. \\[0.5 em]
Our simulations show that the extension of multiarmed bandit algorithms from a multinomial based algorithm to generalized nested logit algorithm can be of high practical benefit. Indeed, the higher flexibility in tuning the parameters of the algorithm provide more possibilities to outbalance the exploration/exploitation trade-off. In particular, the algorithm could be used in scenarios, where the learner wants to explore different product categories and at the same time, wants to avoid exploring weaker products of each category.

\bibliographystyle{plainnat}
\bibliography{lit}
\end{document}